\newtheorem{lemma}{Lemma}
\newtheorem{definition}{Definition}
\newtheorem{theorem}{Theorem}
\newtheorem{problem}{Problem}
\newtheorem{remark}{Remark}
\newcommand{\xinyi}[1]{\textcolor{black}{#1}}
\title{\LARGE \bf
Safe Navigation in Uncertain Crowded Environments Using Risk Adaptive CVaR Barrier Functions
}
\author{Xinyi Wang, Taekyung Kim, Bardh Hoxha, Georgios Fainekos and Dimitra Panagou
\thanks{Xinyi Wang and Taekyung Kim are with the Department of Robotics, University of Michigan, Ann Arbor, MI, 48109, USA {\tt\footnotesize \{xinywa, taekyung\}@umich.edu}}
\thanks{Bardh Hoxha and Georgios Fainekos are with the Toyota North America Research \& Development, 1555 Woodridge Ave,
Ann Arbor, MI 48105, USA {\tt\footnotesize \{firstname.lastname\}@toyota.com}}
\thanks{Dimitra Panagou is with the Department of Robotics and the Department of Aerospace Engineering at University of Michigan, Ann Arbor, MI 48109, USA {\tt\footnotesize dpanagou@umich.edu}}
}
\begin{document}

\maketitle
\thispagestyle{empty}
\pagestyle{empty}

\begin{abstract}
Robot navigation in dynamic, crowded environments poses a significant challenge due to the inherent uncertainties in the obstacle model.  In this work, we propose a risk-adaptive approach based on the Conditional Value-at-Risk Barrier Function (CVaR-BF), where the risk level is automatically adjusted to accept the minimum necessary risk, achieving a good performance in terms of safety and optimization feasibility under uncertainty. Additionally, we introduce a dynamic zone-based barrier function which characterizes the collision likelihood by evaluating the relative state between the robot and the obstacle. By integrating risk adaptation with this new function, our approach adaptively expands the safety margin, enabling the robot to proactively avoid obstacles in highly dynamic environments. Comparisons and ablation studies demonstrate that our method outperforms existing social navigation approaches, and validate the effectiveness of our proposed framework. \href{https://lawliet9666.github.io/cvarbf/}{[Paper Page]}\,\href{https://youtu.be/VHRnmXToLN8?si=lV8pyZSkYPf1puw_}{[Video]}\,\href{https://github.com/Lawliet9666/Adaptive-CVaR-Barrier-Function.git}{[Code]}.

\end{abstract}

\section{Introduction}
Safe navigation in crowded environments with dynamic obstacles remains a fundamental challenge in robotics due to obstacle uncertainty. 
The common approaches involve using risk metrics to quantify and enforce safety constraints under this uncertainty, such as conventional stochastic control methods \cite{lekeufack2024conformal,fushimi2025safety,black2024risk} and robust
control methods \cite{malone2017hybrid, safaoui2024distributionally}. While robust control methods prioritize worst-case scenarios, often leading to overly conservative behaviors, stochastic methods optimize for expected performance, which may result in unsafe decisions in high-risk situations. 

The combination of Conditional Value-at-Risk (CVaR) and Control Barrier Functions (CBFs) has recently been explored to achieve probabilistic safety, offering a trade-off between conservatism and efficiency \cite{ahmadi2021risk,safaoui2024distributionally,kishida2024risk}.
In these frameworks,
CBFs \cite{garg2024advances, kim2025learning} act as safety filters that regulate control inputs, while CVaR \cite{alonso2013optimal} quantifies the expected risk under a given risk level.
However, existing methods based on the CVaR barrier function (CVaR-BF) primarily rely on a \emph{fixed} risk level that remains constant throughout the trajectory. 
This can limit their applicability in dynamic environments with crowded obstacles. 
We highlight that such a fixed risk level is not flexible enough:
A low risk tolerance enhances safety but can render the optimization infeasible, whereas a high risk tolerance improves feasibility at the expense of safety.

To address this limitation, we propose a risk-adaptive navigation approach, a novel extension of the CVaR-BF framework that dynamically adjusts the risk level to maintain safety while ensuring trajectory feasibility. 
Instead of relying on a pre-specified risk parameter, our adaptive controller initializes the system with a conservative risk level (e.g., zero) and incrementally increases it until an optimization-feasible solution is achieved.
In highly dynamic scenarios, where obstacles move unpredictably and rapidly, the robot requires sufficient time and space to respond and adjust its risk level. 
However, overly conservative strategies can limit feasible solutions in crowded environments \cite{tayal2024control, roncero2024multi}, so an approach that maintains safety without excessively restricting the decision space is essential.
To this end, we introduce the concept of a ``Dynamic Zone", where the original safety distance is adaptively expanded based on the relative position and velocity between the robot and its surrounding obstacles.
The robot adjusts its trajectory before nearing obstacles, but only when needed to avoid unnecessary conservatism, while also extending the risk-adaptive range.\\
The main contributions of this paper are as follows:\\
\begin{figure}
    \centering
    \includegraphics[width=0.73\linewidth]{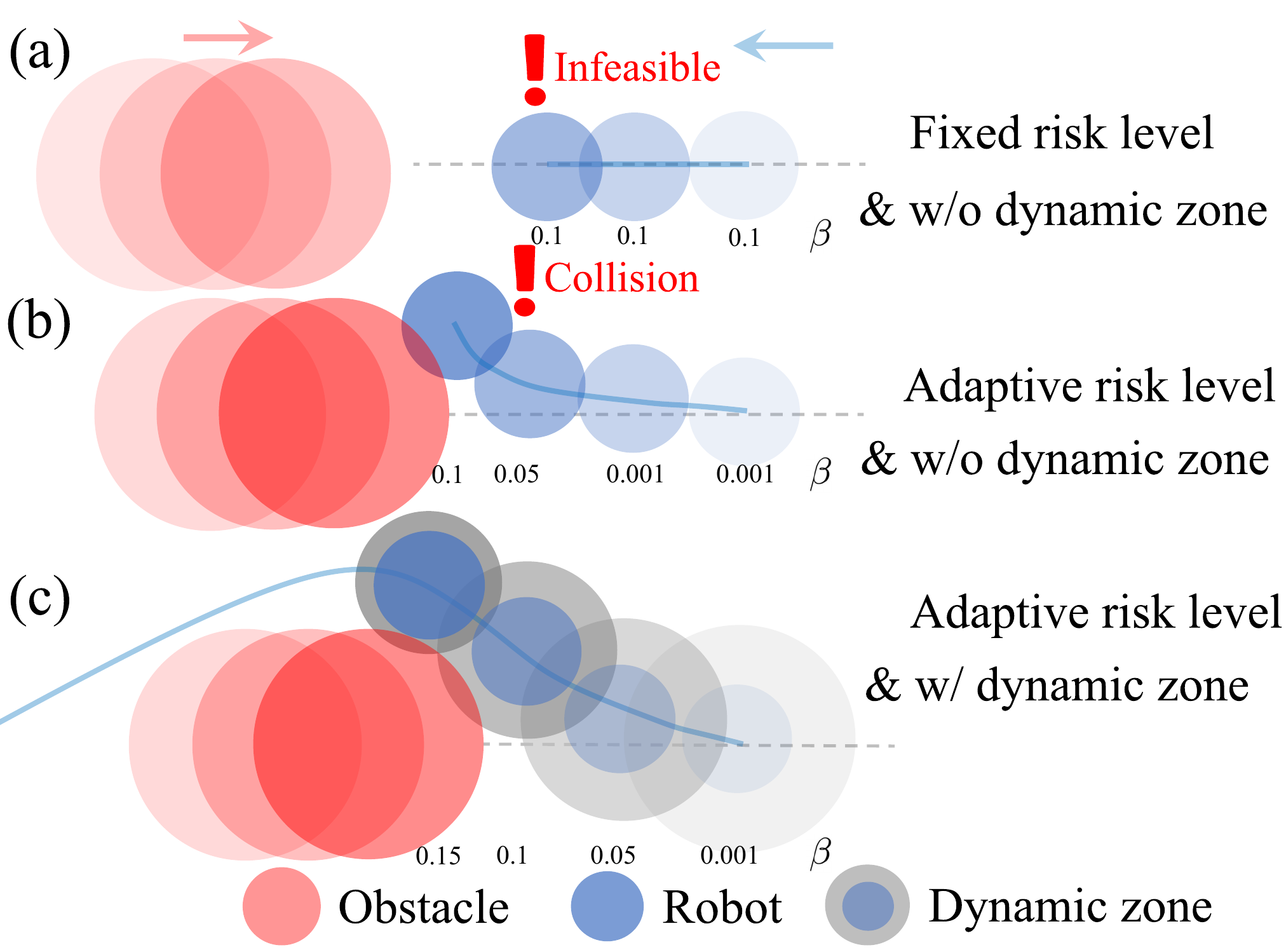}
    \caption{Comparison of fixed vs adaptive risk levels with and without a dynamic zone.}
    \label{fig:zone}
    \vspace{-12pt}
\end{figure}
(1) We introduce an adaptive strategy that adjusts the risk level to adopt the minimum necessary risk when navigating through obstacles, ensuring CVaR safety is guaranteed at least a pre-defined threshold while improving optimization feasibility.\\
(2) To increase the responsiveness to dynamic obstacles, we design a dynamic zone-based barrier function, which expands the available adjustment space for the risk level while maintaining the desired probabilistic safety guarantee.\\
(3) Empirical results demonstrate that our approach outperforms state-of-the-art methods in highly dynamic environments, achieving better success rates and more robustness under various of uncertain settings.

\section{Related Work}

\subsubsection{Risk-Aware Control under Uncertainties}
Risk-aware control under uncertainties can be approached using a variety of risk measures.
The approach in \cite{lekeufack2024conformal} achieves finite-time risk bounds by considering expected risk and penalizing collisions in the cost function. Similarly, \cite{fushimi2025safety, cosner2023robust} develop probabilistic safety bounds over a finite time horizon using a discrete-time CBF condition. 
However, these approaches prioritize average behavior and do not provide safety guarantee at each time step. 
Recently, distributionally robust optimization has been designed to enhance the safety by considering a set of possible distributions to optimize decisions for worst-case scenarios.
\cite{safaoui2024distributionally} reformulates the collision avoidance problem by computing safe half-spaces from obstacle sample trajectories via distributionally-robust optimization. 
\cite{malone2017hybrid} considers the  reachable set of the states of an obstacle under worst-case noise. Although these methods offer robust safety guarantees, they tend to be overly conservative, which can lead to infeasible solutions in dense environments.

\subsubsection{Dynamic Obstacle Avoidance}
Geometric techniques such as Velocity Obstacles (VO) \cite{van2008reciprocal} and Optimal Reciprocal Collision Avoidance (ORCA) \cite{alonso2013optimal} utilize a velocity-obstacle framework that considers both the robot's and obstacles' velocities for avoidance. However, these methods typically do not account for the robot's dynamic model. Another popular trend involves learning-based methods \cite{berducci2024learning}. For instance,
\cite{liu2023intention} prevents the robot from encroaching on the intended paths of other obstacles. Despite their promise, these methods often lack safety analysis and face generalization challenges when dealing with out-of-distribution data.

For safety-critical navigation, a rigorous theoretical framework and analysis are required. In \cite{samavi2024sicnav}, a coupled planning approach is proposed to model the obstacles' motion and solve a joint optimization problem with explicit safety constraints. However, coupled methods falter when the obstacle model is inaccurate \cite{han2024dr}, and accurately modeling obstacle motion is challenging.
CBFs \cite{garg2024advances, kim2025learning} have been developed as safety filters that mitigate unsafe control inputs. Some approaches partition the solution space into convex regions using separating hyperplanes \cite{liu2024safety, safaoui2024distributionally}. For example, \cite{safaoui2024distributionally} reformulates the collision avoidance problem by computing safe half-spaces based on dynamic obstacle sample trajectories.
Recent work has proposed a CBF approach combined with velocity obstacles \cite{tayal2024control, roncero2024multi, haraldsen2024safety}, where the velocity obstacle principle allows the vehicle to maintain forward motion without compromising safety. 
However, this method tends to trigger avoidance maneuvers even when obstacle are far away, thereby leading to an conservative behavior.

\section{Preliminaries}
\subsection{Discrete-Time Control Barrier Functions}
Consider a robot whose motion is modeled by a discrete-time control system:
\begin{equation}
\label{eq:sys}
    \mathbf{x}_{k+1} = f(\mathbf{x}_k,\mathbf{u}_k),
\end{equation}
\xinyi{
where \(\mathbf{x}_k \in \mathcal{X} \subset \mathbb{R}^n\) is the state at time step \(k \in \mathbb{Z}^+\)}, and \(\mathbf{u}_k \in \mathcal{U} \subset \mathbb{R}^m\) is the control input, with \(\mathcal{U}\) being the set of admissible controls for system~\eqref{eq:sys}. The function \xinyi{\(f : \mathbb{R}^n \to \mathbb{R}^n\)} is assumed to be locally Lipschitz continuous with respect to  \(\mathbf{x}_k\) and \(\mathbf{u}_k\).


According to \cite{zeng2021safety}, the function $h$ is a discrete-time CBF for system~\eqref{eq:sys} if there exists an extended class \(\mathcal{K}_\infty\) function \(\alpha : \mathbb R \rightarrow \mathbb R\) such that 
there exists a control input \(\mathbf{u}_k\) satisfying
\begin{equation}
\Delta h(\mathbf{x}_{k}, \mathbf{u}_{k}) \coloneqq h(\mathbf{x}_{k+1}) - h(\mathbf{x}_k)\geq -\alpha\big(h(\mathbf{x}_k)\big).
\label{eq:dt_cbf}
\end{equation}
Following \cite{zeng2021safety}, we choose a linear function \(\alpha(r) = \gamma\, r\) with \(0 < \gamma \leq 1\), so that the condition~\eqref{eq:dt_cbf} becomes
\begin{equation}
h(\mathbf{x}_{k+1}) \geq (1-\gamma) h(\mathbf{x}_k),
\end{equation}
which guarantees that the value of \(h(\mathbf{x})\) decays at an exponential rate governed by \(1-\gamma\). Further, given a nominal control input \(\bar{\mathbf{u}}_k\), a CBF-based controller can be constructed to minimally modify \(\bar{\mathbf{u}}_k\) to guarantee safety:
\begin{equation}
\label{eq:qp}
\min_{\mathbf{u}_k\in \mathcal{U}}  \|\mathbf{u}_k - \bar{\mathbf{u}}_k\|^2, \enspace \text{s.t.} \enspace h(\mathbf{x}_{k+1}) \geq (1-\gamma) h(\mathbf{x}_k).
\end{equation}

\subsection{Probabilistic Constraints}

Due to the inherent uncertainty in the dynamic obstacle scenarios, the safe states of the robot cannot be computed deterministically. Therefore, a probabilistic formulation of safety is required. 
\xinyi{
The estimated obstacle state at time $k$, denoted $\hat{\mathbf{x}}^o_k \in \mathcal{O} \subset \mathbb{R}^n$, is obtained via sensor data and state estimation, and is treated as a deterministic value in our framework. However, the future state of the obstacle $\mathbf{x}^o_{k+1}$ is modeled as a random variable, reflecting the inherent uncertainty in the predicted motion of the obstacle.
Define the safe set as
\begin{equation}
\label{eq:safeset}
\mathcal{S} = \{\mathbf{x}_k \in \mathcal{X} : h(\mathbf{x}_k, \xinyi{\hat{\mathbf{x}}^o_k)} \geq 0\} .
\end{equation}
\(h: \mathcal{X} \times \mathcal{O} \to \mathbb{R}\) is the CBF that depends on both the system state \(\mathbf{x}_k\) and the estimated obstacle state \(\hat{\mathbf{x}}^o_k\) at the current time step~$k$. Therefore}, the CBF at the next time step $k+1$ considering the predicted state, which is denoted as 
\(
h_{k+1} \coloneqq h\big(\mathbf{x}_{k+1}, \mathbf{x}^o_{k+1}\big),
\) becomes a random variable due to the uncertainty in the obstacle state \(\mathbf{x}^o_{k+1}\) 
\footnote{For notational brevity, we omit the explicit argument of the CBF~$h$ when its dependency is clear from the context.}. 

To account for this uncertainty, we enforce safety to the system using the following probabilistic constraints:
\begin{equation}
\mathbb{P}\bigl( h_{k+1} \geq 0 \bigr) \geq 1 - \beta,
\end{equation}
where $\beta \in (0, 1)$ indicates the allowable probability of collisions.
According to \cite{sarykalin2008value}, this expression is equivalent to the Value-at-Risk (VaR) definition:
\begin{equation}
\operatorname{VaR}_\beta(h_{k+1}) = \sup_{\zeta \in \mathbb{R}} \{ \zeta \mid \mathbb{P}(h_{k+1} \geq \zeta) \geq 1-\beta \}, 
\end{equation}
where \(\zeta\) is a decision variable. It measures the $\beta$-quantile value of a random variable $h_{k+1}$.
We then define the CVaR:
\begin{definition}[Conditional Value-at-Risk~(CVaR) \cite{sarykalin2008value}]
The expected loss in the $\beta$-tail of a random variable $h_{k+1}$, given the threshold VaR$_\beta$ is described as:
\begin{equation}
\label{eq:cvar}
\operatorname{CVaR}_\beta(h_{k+1}) \coloneqq \mathbb{E}[h_{k+1} \mid h_{k+1} \leq \operatorname{VaR}_\beta(h_{k+1})].
\end{equation}
\end{definition}
As demonstrated in \cite{sarykalin2008value}, the expression in \eqref{eq:cvar} can be reformulated as the following optimization problem
\begin{equation}
\label{eq:cvar_opt}
\operatorname{CVaR}_\beta(h_{k+1}) =- \inf_{\zeta \in \mathbb{R}} \mathbb{E} \left[ \zeta + \frac{(-h_{k+1} - \zeta)_+}{\beta} \right],
\end{equation}
where $(\cdot)_+ = \max\{\cdot, 0\}$. Note, a value of $\beta \to 1$ corresponds to a risk-neutral case, i.e., $\operatorname{CVaR}_{\beta \to 1}(h_{k+1}) = \mathbb{E}(h_{k+1})$; whereas a value of $\beta \to 0$ is a risk-averse case, i.e., $\operatorname{CVaR}_{\beta \to 0}(h_{k+1}) = \operatorname{VaR}_{\beta \to 0}(h_{k+1})$ \cite{akella2024risk}. 
Compared with VaR, CVaR adheres to a group of axioms crucial for rational risk assessment \cite{majumdar_how_2020}.

Now, the connection between the probabilistic constraint, VaR, and CVaR constraint is:
\begin{equation}
\begin{aligned}
\label{eq:psafe}
& \operatorname{CVaR}_{\beta}(h_{k+1}) \geq 0 \ \Rightarrow \\
&\quad \operatorname{VaR}_{\beta}(h_{k+1}) \geq 0 \ \Leftrightarrow 
\ \mathbb{P}\bigl(h_{k+1}\geq 0\bigr) \geq 1 - \beta. 
\end{aligned}
\end{equation}

\subsection{Discrete-Time CVaR Barrier Functions} 
To handle uncertainty in a risk-aware manner, we employ a dynamic coherent risk measure known as CVaR-Safety \cite{ahmadi2021risk}. Define the cumulative CVaR values from time $0$ to $k$ as the
composition of per-step CVaR operators applied sequentially over the time horizon. Formally, it is defined as:
\[\operatorname{CVaR}_{\beta}^{0:k} := \operatorname{CVaR}_{\beta}^0 \circ \operatorname{CVaR}_{\beta}^1 \circ \dots \circ \operatorname{CVaR}_{\beta}^k.
\]
\begin{definition}[CVaR-Safety \cite{ahmadi2021risk}]
Given a safe set \(\mathcal{S}\) as defined in \eqref{eq:safeset} and a risk level \(\beta \in (0, 1)\), we call the solutions to \eqref{eq:sys}, starting at \(\mathbf{x}_0 \in \mathcal{S}\), \emph{CVaR-safe} if 
\begin{equation}    
\operatorname{CVaR}_{\beta}^{0:k}(h_k) \geq 0, \quad \forall\, k \geq 0.
\end{equation}
\end{definition}

To enforce CVaR-safety, we utilize \emph{CVaR barrier functions} for discrete-time systems.

\begin{definition}[CVaR Barrier Functions \cite{ahmadi2021risk}]
For the discrete-time system \eqref{eq:sys} and a risk level \(\beta \in (0,1)\), a continuous function \(h: \mathbb{R}^n \to \mathbb{R}\) is called a \emph{CVaR barrier function} for the safe set \(\mathcal{S}\)~\eqref{eq:safeset} if there exists a constant \(\gamma \in (0,1]\) such that
for each $\mathbf{x}_k \in \mathbb{R}^n$, there exist a $\mathbf{u}_k \in \mathbb{R}^m$ s.t.:
\begin{equation}
\label{eq:cvar_bc}
\operatorname{CVaR}_{\beta}^k\bigl(h_{k+1}\bigr) \ge (1-\gamma)\, h_k,
\quad \forall\, \mathbf{x}_k \in \mathcal{X}.
\end{equation}
\end{definition}

\begin{theorem}[\cite{ahmadi2021risk}] 
Consider the discrete-time system in \eqref{eq:sys} and the safe set \(\mathcal{S}\) as defined in \eqref{eq:safeset}. Let \(\beta \in (0, 1)\) be a given confidence level. Then, \(\mathcal{S}\) is CVaR-safe if there exists a CVaR barrier function as defined above.
\label{thm:cvar_safe}
\end{theorem}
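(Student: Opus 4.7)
The plan is to prove this by induction on $k$, using the coherent risk measure properties of CVaR (in particular, monotonicity and positive homogeneity) to propagate the barrier inequality through the nested composition $\operatorname{CVaR}_{\beta}^{0:k}$.

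For the base case $k=0$, I would observe that $h_0 = h(\mathbf{x}_0, \hat{\mathbf{x}}^o_0)$ is a deterministic quantity given the initial state, and since $\mathbf{x}_0 \in \mathcal{S}$ by assumption, definition~\eqref{eq:safeset} gives $h_0 \geq 0$. Because CVaR evaluated on a (conditionally) deterministic quantity returns that quantity, $\operatorname{CVaR}_{\beta}^{0:0}(h_0) = h_0 \geq 0$.

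For the inductive step, I would assume $\operatorname{CVaR}_{\beta}^{0:k-1}(h_{k-1}) \geq 0$ and aim to show $\operatorname{CVaR}_{\beta}^{0:k}(h_k) \geq 0$. The starting point is the CVaR barrier condition \eqref{eq:cvar_bc} applied at step $k-1$, which yields the pointwise (in $\mathbf{x}_{k-1}$) inequality $\operatorname{CVaR}_{\beta}^{k-1}(h_k) \geq (1-\gamma)\, h_{k-1}$. I would then apply the outer composition $\operatorname{CVaR}_{\beta}^{0:k-2} = \operatorname{CVaR}_{\beta}^0 \circ \cdots \circ \operatorname{CVaR}_{\beta}^{k-2}$ to both sides. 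Using the monotonicity of each one-step CVaR operator (inherited from coherence), and then positive homogeneity to pull out the nonnegative constant $(1-\gamma)$, gives
\begin{equation*}
\operatorname{CVaR}_{\beta}^{0:k}(h_k) \;\geq\; (1-\gamma)\,\operatorname{CVaR}_{\beta}^{0:k-1}(h_{k-1}) \;\geq\; 0,
\end{equation*}
where the last inequality follows from the inductive hypothesis together with $1-\gamma \geq 0$.

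The main obstacle will be cleanly justifying the step in which the outer nested operator is pushed through the barrier inequality, since this requires that each constituent one-step conditional CVaR preserves pointwise inequalities and scalar factoring of nonnegative constants. These are exactly the monotonicity and positive-homogeneity axioms of the coherent risk measure CVaR, and they are preserved under composition; I would invoke the coherence properties cited after \eqref{eq:cvar_opt} (and in the Ahmadi et al.\ reference) to discharge this step. Apart from that, the argument is a clean induction whose only ingredients are the barrier inequality, the initial safety condition, and these two axioms.
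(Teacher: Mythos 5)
The paper offers no proof of this theorem---it is imported verbatim from the cited reference \cite{ahmadi2021risk}---so there is nothing in the text to compare against; judged on its own, your induction is correct and is essentially the standard argument used in that reference (monotonicity plus positive homogeneity of the nested coherent risk measure, there typically unrolled to $\operatorname{CVaR}_{\beta}^{0:k}(h_k) \geq (1-\gamma)^k h_0 \geq 0$ rather than phrased as induction). The only step worth making explicit is the one you use silently in the indexing: since $h_k$ is known at time $k$, $\operatorname{CVaR}_{\beta}^{k}(h_k) = h_k$, which is what lets $\operatorname{CVaR}_{\beta}^{0:k-1}(h_k)$ (obtained by applying $\operatorname{CVaR}_{\beta}^{0:k-2}$ to the barrier inequality) be identified with $\operatorname{CVaR}_{\beta}^{0:k}(h_k)$, and likewise on the right-hand side; with that observation, and noting $1-\gamma \in [0,1)$, the argument closes cleanly.
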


\begin{figure}
    \centering
    \includegraphics[width=0.9\linewidth]{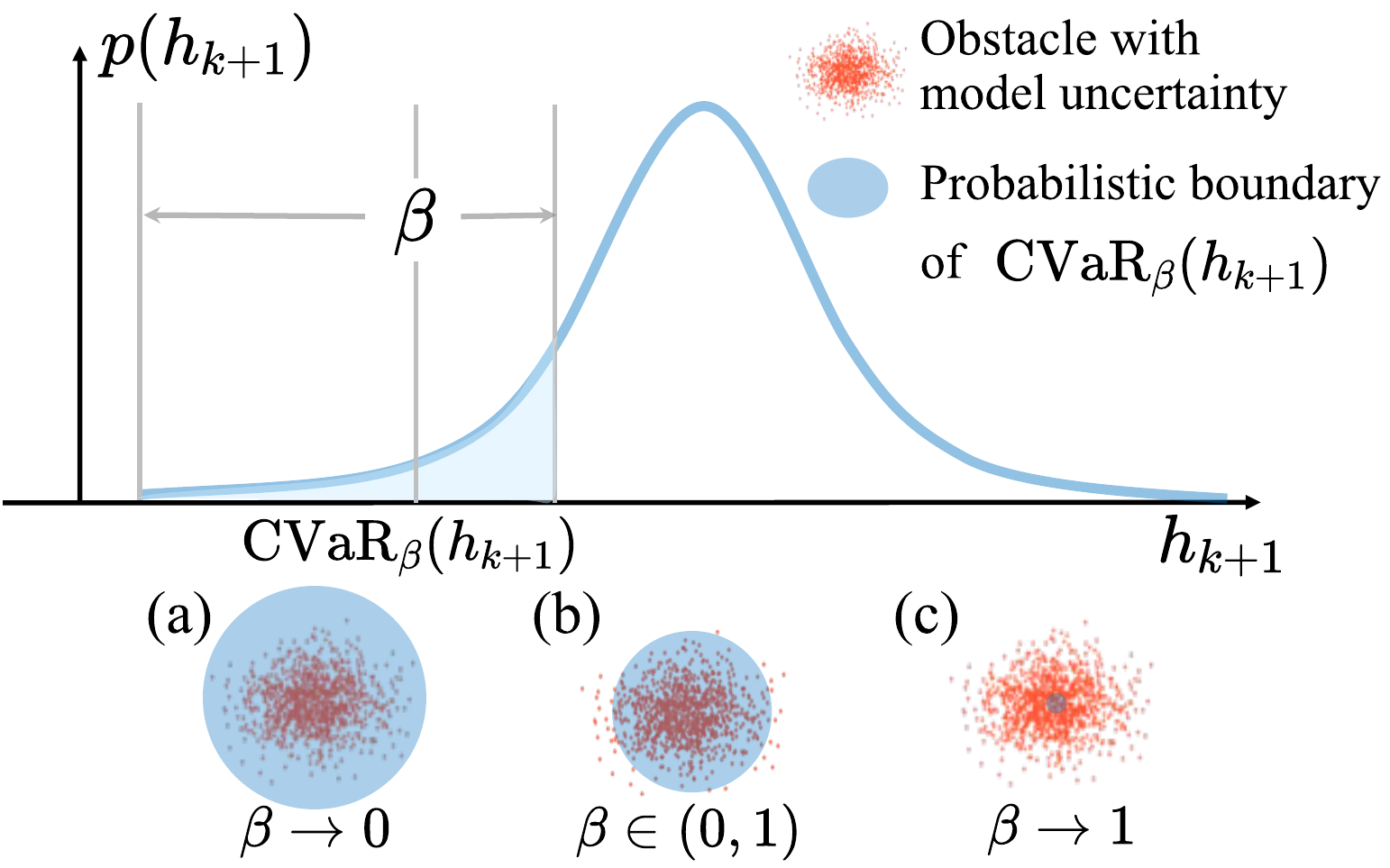}
    \caption{Illustration of relationship between CVaR, the risk level $\beta$, and safety. The shaded blue region represents the boundary of $\operatorname{CVaR}_{\beta}(h_{k+1})$, which includes: (a) all possible obstacle positions, (b) the portion of the distribution captured by the CVaR at a specified risk level $\beta$, and (c) the expected (mean) positions of the obstacles. Here, $p(h_{k+1})$ denotes the probability distribution of the $h_{k+1}$.
     } 
    \label{fig:risk}
    \vspace{-12pt}
\end{figure}

Similar to the constraint function of the CBF for deterministic systems in \eqref{eq:qp}, the optimization problem using the CVaR-BF constraint can be constructed as
\begin{equation}
\begin{aligned}
\min_{\mathbf{u}_k \in \mathcal{U}} \|\mathbf{u}_k - \bar{\mathbf{u}}_k\|^2, 
\, \text{s.t.} \, \operatorname{CVaR}_{\beta}^k\bigl(h_{k+1}\bigr) \ge (1-\gamma)\, h_k. \\
\end{aligned}
\label{eq:cvarqp}
\end{equation}

\section{Adaptive Risk Level of CVaR-BF}
\subsection{Safety and Feasibility Analysis}

We identify a key issue in CVaR-BF constraint: the tuning hyper-parameter, the risk level~$\beta$, presents a trade-off between the safety and feasibility of the optimization problem in \eqref{eq:cvarqp}. 
Specifically, this risk level~$\beta$, which is set by the user and held constant throughout the trajectory \cite{ahmadi2021risk}, plays a critical role in determining the robot’s behavior near the boundary of the safe set.  
We analyze this issue from a set-based perspective in two aspects: safety and feasibility.



\textbf{Safety Analysis:} The CVaR-BF constraint ensures that the probabilistic constraint in \eqref{eq:psafe} is satisfied.
As shown in Fig.~\ref{fig:risk}: (1) higher \(\beta\) relaxes the CVaR-BF constraint, allowing the robot to operate closer to obstacles, albeit at the expense of increased risk; (2) lower \(\beta\) enforces a tighter CVaR-BF constraint, yet may cause a higher chance of infeasibility and over-conservative decisions.

\textbf{Feasibility Analysis:} Define the overall feasible set at each time step \(k\) as the intersection of the reachable set and the CVaR-BF constraint set. Specifically, the reachable set from the current state \(\mathbf{x}_k\) to the next state at time \(k+1\) is given by
\begin{equation}
\mathcal{R}_k = \left\{ \mathbf{x}_{k+1} \in \mathcal{X} \,\middle|\, \exists\, \mathbf{u}_k \in \mathcal{U} \text{ s.t. } \mathbf{x}_{k+1} = f(\mathbf{x}_k, \mathbf{u}_k) \right\}.
\end{equation}
At a given risk level \(\beta\), the set of admissible control inputs that satisfy the CVaR-BF constraint at time $k$ is defined as
\begin{equation}
\mathcal{U}^k_{\beta} = \left\{ \mathbf{u}_k \in \mathcal{U} \,\middle|\, \operatorname{CVaR}_{\beta}^k \bigl(h_{k+1}\bigr) \ge (1-\gamma)\, h_k \right\}.
\end{equation}
Consequently, the set of states for which the CVaR-BF constraint is feasible at time step \(k\) is
\begin{equation}
\mathcal{S}_{\operatorname{CVaR},k} = \left\{ \mathbf{x}_k \in \mathcal{X} \,\middle|\, \mathcal{U}^k_{\beta} \neq \emptyset \right\}.
\end{equation}
Thus, the overall feasible set is formulated as
\begin{equation}
\mathcal{F}_k = \mathcal{R}_k \cap \mathcal{S}_{\operatorname{CVaR},k}.
\label{eq:feasible_set}
\end{equation}
Notably, since the CVaR is monotonically increasing with respect to \(\beta\) according to its definition in \eqref{eq:cvar}, increasing \(\beta\) relaxes the safety constraint and consequently enlarges the feasible set. This analysis reveals a trade-off: if the safety constraints are too strict (small \(\beta\)), the intersection \(\mathcal{F}_k\) may be empty, leading to infeasibility; if they are too loose (large \(\beta\)), safety may be compromised. 

\subsection{Adaptive Risk Level}
\label{sec:beta}

To resolve this conflict, we propose an adaptive tuning strategy that at each time step $k$ adjusts \(\beta\) based on the robot’s perceived risk relative to the obstacles. This adaptive mechanism aims to maintain a proper balance between feasibility and safety.

Let us define the adaptive risk level at each time \(k\) as
\begin{equation}
\label{eq:betak1}
\beta_k \coloneqq \min\{ \beta \in (0,\beta_u] \mid \mathcal{U}^k_{\beta} \neq \emptyset \},
\end{equation}
where $\beta_u$ is the fixed risk level used in the standard CVaR formulation as in \eqref{eq:cvar} and here we use it as the upper bound of the adaptive risk level. In words, we adaptively select $\beta_k$ as the smallest value, not exceeding $\beta_u$, that yields a nonempty control space.

\begin{definition}[Risk Adaptive CVaR Barrier Function] \label{def:adaptive_cvar_bf}
Consider the discrete-time system \eqref{eq:sys} and an adaptive risk level \(\beta_k\) at each time step \(k\) as defined in \eqref{eq:betak1}. A function \(h:\mathbb{R}^n\to\mathbb{R}\) is called a \emph{Risk Adaptive CVaR Barrier Function} for the safe set \(\mathcal{S}\) in \eqref{eq:safeset} if there exists a constant \(\gamma \in (0,1]\) such that for each $\mathbf{x}_k \in \mathbb{R}^n$, there exist a $\mathbf{u}_k \in \mathbb{R}^m$ s.t.,
\begin{equation}
\label{eq:cvar_cons_betak}
\operatorname{CVaR}_{\beta_k}^k \left( h_{k+1} \right) \ge (1-\gamma)\, h_k, \quad \forall \mathbf{x}_k \in \mathcal{X}. 
\end{equation}
\end{definition}
The notion of the risk adaptive CVaR barrier function allows to initialize the risk level with a conservative value (close to zero), and then incrementally increase it when necessary. A trade-off is needed only when the robot nears obstacles, while risk level remains low elsewhere to maintain a high probability of safety throughout the trajectory \footnote{We restrict the adaptive risk level to not exceed $\beta_u$, so that in the worst-case scenario the risk probability remains identical to that of the fixed-parameter formulation.}.

\begin{theorem}[CVaR-Safety with Adaptive Risk Level]
Consider the discrete-time system \eqref{eq:sys} and the safe set  \(\mathcal{S}\)~\eqref{eq:safeset}. Let $\beta_u \in (0,1)$ be a fixed upper-bound risk level, and let $\beta_k \in (0,\beta_u]$ be an adaptive risk level at time $k$ as defined in \eqref{eq:betak1}. Then, \(\mathcal{S}\) is at least CVaR-safe with respect to the risk level $\beta_u$ if there exists a risk adaptive CVaR barrier function as defined in Definition~\ref{def:adaptive_cvar_bf}.
\label{thm:adaptive}
\end{theorem}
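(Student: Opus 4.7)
The plan is to reduce the adaptive statement to the fixed-level result in Theorem~\ref{thm:cvar_safe} by exploiting the monotonicity of CVaR in the risk parameter. The key observation is that, by construction in \eqref{eq:betak1}, the adaptive risk level satisfies $\beta_k \in (0, \beta_u]$ at every time step $k$. Since CVaR is monotonically non-decreasing in $\beta$ (as noted in the paragraph following \eqref{eq:cvar_opt} and used in the feasibility analysis of Section IV.A), one has
\begin{equation}
\operatorname{CVaR}_{\beta_u}^k(h_{k+1}) \;\ge\; \operatorname{CVaR}_{\beta_k}^k(h_{k+1})
\end{equation}
for any random variable $h_{k+1}$ and any realization $\mathbf{x}_k \in \mathcal{X}$.

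The next step is to chain this inequality with the adaptive barrier condition \eqref{eq:cvar_cons_betak}. By Definition~\ref{def:adaptive_cvar_bf}, for each $\mathbf{x}_k$ there exists a feasible $\mathbf{u}_k \in \mathcal{U}$ achieving $\operatorname{CVaR}_{\beta_k}^k(h_{k+1}) \ge (1-\gamma) h_k$. Combining with monotonicity gives
\begin{equation}
\operatorname{CVaR}_{\beta_u}^k(h_{k+1}) \;\ge\; (1-\gamma)\, h_k,
\quad \forall\, \mathbf{x}_k \in \mathcal{X}.
\end{equation}
This is exactly the fixed-level CVaR barrier condition \eqref{eq:cvar_bc} evaluated at the upper-bound risk level $\beta_u$. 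Hence $h$ qualifies as a CVaR barrier function for $\mathcal{S}$ at risk level $\beta_u$ in the sense of the definition preceding Theorem~\ref{thm:cvar_safe}.

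Finally I would invoke Theorem~\ref{thm:cvar_safe} directly: since a CVaR barrier function at level $\beta_u$ exists, the safe set $\mathcal{S}$ is CVaR-safe with respect to $\beta_u$, i.e., $\operatorname{CVaR}_{\beta_u}^{0:k}(h_k) \ge 0$ for all $k\ge 0$, which is the claim. The only subtle point, and the one I would be most careful about, is justifying the monotonicity of CVaR in $\beta$ cleanly for the $\beta_k$-dependent operator---this follows from the variational formula \eqref{eq:cvar_opt} because shrinking $\beta$ scales the positive-part penalty $(-h_{k+1}-\zeta)_+/\beta$ upward pointwise, so the infimum (and thus $-\operatorname{CVaR}_{\beta}$) is non-increasing in $\beta$, equivalently $\operatorname{CVaR}_{\beta}$ is non-decreasing. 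Everything else is a one-line substitution into the already-proved Theorem~\ref{thm:cvar_safe}; no new cumulative-CVaR argument is needed because that machinery is already encapsulated in that theorem.
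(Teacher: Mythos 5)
Your proof is correct and follows essentially the same route as the paper: use $\beta_k \le \beta_u$ together with monotonicity of CVaR in the risk level to upgrade the adaptive condition \eqref{eq:cvar_cons_betak} to the fixed-level barrier condition \eqref{eq:cvar_bc} at $\beta_u$, then invoke Theorem~\ref{thm:cvar_safe}. Your extra justification of the monotonicity via the variational formula \eqref{eq:cvar_opt} is a correct detail the paper simply asserts.
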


\begin{proof}
Since by construction $\beta_k \leq \beta_u$, and because CVaR is monotonic with respect to the risk level, it follows that $\operatorname{CVaR}_{\beta_u}^k(h_{k+1}) \ge \operatorname{CVaR}_{\beta_k}^k(h_{k+1})$. Thus, the condition $\operatorname{CVaR}_{\beta_k}^k(h_{k+1}) \ge (1-\gamma)h_k$ implies that $\operatorname{CVaR}_{\beta_u}^k(h_{k+1}) \ge (1-\gamma)h_k$. Together with Theorem~\ref{thm:cvar_safe}, this inequality guarantees that the solutions to \eqref{eq:sys} are CVaR-safe with a probability level that is at least as high as that ensured by a fixed risk level $\beta_u$.
\end{proof}

\begin{remark}
Given the maximum allowable risk level \(\beta_u\), there may still be cases where no solution exists for \eqref{eq:betak1}, particularly in dynamic obstacle environments.
\end{remark}

\section{Dynamic Zones for CVaR Barrier Functions}
\label{sec:adaptiveh}


We introduce the notion of a dynamic zone-based barrier function to expand the adjustment space of the \(\beta_k\) value, while also ensuring that the given risk level $\beta_u$ is met.

Effective risk management is essential for robotic navigation, particularly in environments with high-speed obstacles and uncertainties that shorten reaction times and increase collision risks.
Figure~\ref{fig:zone}a shows that using a fixed risk level can render the problem infeasible when the robot approaches an obstacle. In contrast, Fig.~\ref{fig:zone}b employs an adaptive risk level (as described in Sec.~\ref{sec:beta}) that starts conservatively prompting the robot to initiate a turn early, and then relaxes the safety requirements as needed. However, without incorporating a dynamic zone that provides a virtual radius (an extra buffer), this risk adjustment can ultimately compromise safety and lead to collisions.
Therefore, as shown in Fig.~\ref{fig:zone}c, the combination of an adaptive risk level with a dynamic zone not only facilitates early obstacle avoidance but also provides greater flexibility for risk adjustments, thereby relaxing constraints and expanding the feasible space.

\subsection{Dynamic Zone–Based Barrier Function}

Classical CBFs typically rely on a distance-based measure that fails to account for the motion and unpredictability of obstacles. For example, one common formulation is:
\begin{equation}
\begin{aligned}
\label{eq:hd}
h^\text{D}_{k} = \|\mathbf{p}_k - \xinyi{\hat{\mathbf{p}}^o_k} \|^2 - R_{\text{safe}}^2,
\end{aligned}
\end{equation}
where \(R_{\text{safe}}\) represents a threshold distance capturing the minimal allowable separation, \xinyi{and \(\mathbf{p}_k \in \mathbb{R}^{d}\) and \(\xinyi{\hat{\mathbf{p}}^o_k} \in \mathbb{R}^{d}\) represent the position of robot and obstacle w.r.t. a global frame (with \(n = 2d\) for a \(d\)-dimensional space), respectively.}
This formulation may lead to myopic behavior, causing the robot to navigate too close to obstacles and thereby increasing the risk of collision.

In contrast, functions based on velocity obstacles or collision cones \cite{tayal2024control, roncero2024multi} incorporate the relative motion between the robot and the obstacle. A typical candidate is:
\begin{equation}
\begin{aligned}
\label{eq:cone}
& h^\text{C}_{k} = \langle \mathbf{p}^\text{rel}_{k}, \mathbf{v}^\text{rel}_{k} \rangle + \|\mathbf{p}^\text{rel}_{k}\|\|\mathbf{v}^\text{rel}_{k}\| \cos \phi,\\
&\mathbf{p}^\text{rel}_{k} = \mathbf{p}_k - \xinyi{\hat{\mathbf{p}}^o_k}, \quad \mathbf{v}^\text{rel}_{k} = \mathbf{v}_k - \xinyi{\hat{\mathbf{v}}^o_k},
\end{aligned}
\end{equation}
where \(\phi\) is the half-angle of the collision cone, defined as
\(\cos \phi = \sqrt{\|\mathbf{p}^\text{rel}_{k}\|^2 - R_{\text{safe}}^2} \,/\, \|\mathbf{p}^\text{rel}_{k}\|\), \xinyi{and \(\mathbf{v}_k \in \mathbb{R}^{d}\) and \(\xinyi{\hat{\mathbf{v}}^o_k} \in \mathbb{R}^{d}\) represent the velocity of robot and obstacle, respectively}.
This CBF enforces that the angle between \(\mathbf{p}^\text{rel}_{k}\) and \(\mathbf{v}^\text{rel}_{k}\) remains less than \(\pi - \phi\), thereby ensuring that the robot is directed away from the obstacle.
However, such geometric constraints can be overly conservative. In highly dynamic environments, they may force the robot to take unnecessary actions, leading to premature path diversions and, in some cases, rendering the navigation problem infeasible~\cite{roncero2024multi}.

To address these limitations, we propose a dynamic zone–based barrier function that leverages the predicted relative state:
\begin{equation}
\begin{aligned}
\label{eq:hz}
    h^\text{Z}_{k} &\coloneqq \|\mathbf{p}_k - \xinyi{\hat{\mathbf{p}}^o_k}\|^2 - R_{\text{safe}}^2 \Bigl(1 + \Delta_k\Bigr), \\
    \Delta_k &= \left(-\frac{\langle \mathbf{p}^\text{rel}_{k}, \mathbf{v}^\text{rel}_{k} \rangle}{\|\mathbf{p}^\text{rel}_{k}\|\,\|\mathbf{v}^\text{rel}_{k}\|}\right)_+,
\end{aligned}
\end{equation}
where \((\cdot)_+\) denotes the nonnegative part, i.e., \(\max\{0, \cdot\}\), ensuring that
\(
\Delta_k \in [0,1]
\) \footnote{To facilitate the optimization, we employ the softplus function as an approximation of the max operator to get nonzero gradient \cite{zheng2015improving}.}.
The interpretation is as follows:
\begin{itemize}
    \item If \( \frac{\langle \mathbf{p}^\text{rel}_{k}, \mathbf{v}^\text{rel}_{k} \rangle}{\|\mathbf{p}^\text{rel}_{k}\|\,\|\mathbf{v}^\text{rel}_{k}\|} \geq 0\) $\Rightarrow$ \(\Delta_k = 0\). This implies that the robot and the obstacle are moving away from each other, thereby reducing the likelihood of a collision.
    \item If \( \frac{\langle \mathbf{p}^\text{rel}_{k}, \mathbf{v}^\text{rel}_{k} \rangle}{\|\mathbf{p}^\text{rel}_{k}\|\,\|\mathbf{v}^\text{rel}_{k}\|} < 0 \) $\Rightarrow$  \(\Delta_k > 0\). This implies that the  robot and the obstacle are approaching each other, thereby increasing the likelihood of a collision.
\end{itemize}

Instead of imposing a direct constraint on the relative angle which can lead to unnecessary obstacle avoidance when the robot is far away, our approach modulates the safety zone only when necessary. When the robot and obstacles are far apart, even if the safety zone radius is expanded, it does not significantly influence the robot's behavior due to the large relative distance. 
Thus, this strategy prevents unnecessary avoidance of obstacles and avoids the overly conservative behavior that can result from rigid angle constraints.


\subsection{Probabilistic Safety Guarantee} 

Next, we detail how to derive a new upper bound, denoted by $\bar{\beta}_u \in (0,1)$, for the adaptive risk level within the dynamic zone–based approach. The key insight is that the dynamic zone represents a larger, dynamic, yet virtual safety distance, rather than the actual physical separation between the robot and an obstacle (see Fig.~\ref{fig:zone}c). We propose an analytical formulation that leverages this expanded safety zone to permit a wider range of risk level adaptation, i.e., $\beta_k \in (0, \bar{\beta}_u]$,  while preserving the same probabilistic safety guarantee as that achieved with the original conservative risk level $\beta_u$ associated with the conventional distance-based barrier function $h^\text{D}_{k}$~\eqref{eq:hd}.

Let \(h^\text{D}_{k+1}\) denote the original distance‐based function with probability density function $p_{h^\text{D}_{k+1}}(x)$, where $x$ represents a possible realization of $h^\text{D}_{k+1}$. Recall that our dynamic zone‐based barrier function $h^\text{Z}_{k+1}$ is defined by expanding the safety radius via a factor related to $\Delta_{k+1}$; equivalently, we express it as a shift:
\begin{equation}
\label{eq:hzhd}
h^\text{Z}_{k+1} = h^\text{D}_{k+1} - \Delta_{k+1} \, R_{\text{safe}}^2 .
\end{equation}
By the subadditivity property of CVaR~\cite{majumdar_how_2020}, we have
\begin{equation}
\label{eq:cvarhzhd_betau}
\begin{aligned}
\operatorname{CVaR}^k_{\beta}\bigl(h^\text{Z}_{k+1}\bigr) &
= \operatorname{CVaR}^k_{\beta}\bigl(h^\text{D}_{k+1} - \Delta_{k+1}\,R_{\text{safe}}^2 \bigr)\\
& \leq \operatorname{CVaR}^k_{\beta}\bigl(h^\text{D}_{k+1}\bigr) - \operatorname{CVaR}^k_{\beta}\bigl(\Delta_{k+1}\,R_{\text{safe}}^2 \bigr).
\end{aligned}
\end{equation}
Substituting these relations \eqref{eq:hzhd} and \eqref{eq:cvarhzhd_betau} into the original CVaR-BF constraint yields:
\begin{equation}
\begin{aligned}
 \operatorname{CVaR}^k_{\bar{\beta}_u}\bigl(h^\text{Z}_{k+1}\bigr) &\ge (1-\gamma)\,h^\text{Z}_k \Rightarrow  \\
 \operatorname{CVaR}^k_{\bar{\beta}_u}\bigl(h^\text{D}_{k+1}\bigr) &\ge (1-\gamma)h^\text{D}_{k} \\&+\bigl(\operatorname{CVaR}^k_{\beta}\bigl(\Delta_{k+1}\bigr)- (1-\gamma)\Delta_{k}\bigr)R_{\text{safe}}^2.
\end{aligned}
\end{equation}
Define a dynamic risk offset at time $k+1$ as $\delta_{k+1} \coloneqq \bigl(\operatorname{CVaR}^k_{\beta}\bigl(\Delta_{k+1}\bigr)- (1-\gamma)\,\Delta_{k}\,\bigr) R_{\text{safe}}^2$.
When an obstacle moves toward the robot, causing the likelihood of a collision to increase, i.e., $\operatorname{CVaR}^k_{\beta}\bigl(\Delta_{k+1}\bigr) \geq \mathbb{E}(\Delta_{k+1}) > (1-\gamma) \Delta_{k}$, this implies that $\delta_{k+1}>0$.
Since CVaR is monotonic with respect to the risk level, this additional term allows us to define a new, less conservative upper bound on the risk level.
One can show that, if the same safety threshold $R_\text{safe}$ is maintained, the risk level $\bar{\beta}_u$ permitted in the dynamic zone-based formulation can be greater than that in the original distance-based formulation $\beta_u$, i.e., $\bar{\beta}_u \geq \beta_u$.
The maximum risk level $\bar{\beta}_u$ can be computed by searching for the value such that the following condition holds:
\begin{equation}
\label{eq:cvar_relation}
\begin{aligned}
    \bar{\beta}_u = \{ \beta \,| \operatorname{CVaR}^k_\beta(h_{k+1}^D) = \operatorname{CVaR}^k_{\beta_u}(h_{k+1}^D) + \delta_{k+1} \}
\end{aligned}
\end{equation}
 Thus, by adopting the dynamic zone-based barrier function $h^\text{Z}_{k+1}$ and selecting the adaptive risk level according to
\begin{equation}
\label{eq:beta_k}
\beta_k = \min\left\{ \beta \in (0,\bar{\beta}_u] \,\middle|\, \mathcal{U}^k_{\beta} \neq \emptyset \right\},
\end{equation}
we ensure that the probabilistic safety guarantee is maintained at the level corresponding to the original $\beta_u$.

\begin{lemma}[Equivalence of Probabilistic Safety Guarantee]\label{thm:equivalence}
Given a safe set $\mathcal{S}$~\eqref{eq:safeset} that is CVaR-safe under the risk adaptive CVaR-BF defined in Definition~\ref{def:adaptive_cvar_bf} with the distance-based barrier function $h^\text{D}$ and a fixed risk level upper bound $\beta_u$. Then, by adopting the dynamic zone–based barrier function $h^\text{Z}$ together with the newly derived upper bound $\bar{\beta}_u$ for the adaptive risk level, the resulting safety guarantee is equivalent to that provided by the original CVaR-BF. In other words, the safe set $\mathcal{S}$ remains CVaR-safe with the same probabilistic guarantee.
\end{lemma}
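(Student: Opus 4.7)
The plan is to show that the dynamic-zone barrier inequality, evaluated at the upper bound $\bar{\beta}_u$, implies the original distance-based barrier inequality at the fixed risk level $\beta_u$, so that Theorem~\ref{thm:cvar_safe} applied to $h^\text{D}$ with $\beta_u$ can be invoked directly. The algebraic ingredients are already laid out in \eqref{eq:hzhd}--\eqref{eq:cvar_relation}; the lemma amounts to chaining them into a single implication.

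First I would start from the hypothesis that $h^\text{Z}$ is a risk adaptive CVaR barrier function with adaptive risk level $\beta_k \in (0,\bar{\beta}_u]$ chosen by \eqref{eq:beta_k}, so
\begin{equation}
\operatorname{CVaR}^k_{\beta_k}\bigl(h^\text{Z}_{k+1}\bigr) \ge (1-\gamma)\, h^\text{Z}_k.
\end{equation}
Monotonicity of CVaR in the risk level propagates this to $\operatorname{CVaR}^k_{\bar{\beta}_u}(h^\text{Z}_{k+1}) \ge (1-\gamma)\, h^\text{Z}_k$. Next I would substitute the decomposition \eqref{eq:hzhd} for $h^\text{Z}$ on both sides and invoke the CVaR subadditivity bound \eqref{eq:cvarhzhd_betau}; the sign convention makes the inequality flow in the useful direction because $\Delta_{k+1}R_\text{safe}^2$ enters with a minus sign. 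Rearranging and identifying the dynamic risk offset $\delta_{k+1}$ produces the intermediate inequality $\operatorname{CVaR}^k_{\bar{\beta}_u}\bigl(h^\text{D}_{k+1}\bigr) \ge (1-\gamma)\, h^\text{D}_k + \delta_{k+1}$ already displayed in the paragraph preceding the lemma.

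The closing step is to substitute the defining identity \eqref{eq:cvar_relation} of $\bar{\beta}_u$, namely $\operatorname{CVaR}^k_{\bar{\beta}_u}(h^\text{D}_{k+1}) = \operatorname{CVaR}^k_{\beta_u}(h^\text{D}_{k+1}) + \delta_{k+1}$, which cancels $\delta_{k+1}$ from both sides and leaves $\operatorname{CVaR}^k_{\beta_u}(h^\text{D}_{k+1}) \ge (1-\gamma)\, h^\text{D}_k$. This is precisely the CVaR barrier condition of Theorem~\ref{thm:cvar_safe} for $h^\text{D}$ at the fixed risk level $\beta_u$; applying that theorem yields CVaR-safety of $\mathcal{S}$ with the same probabilistic level $1-\beta_u$ as the distance-based formulation, proving equivalence.

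The main obstacle I anticipate is not the chain of inequalities itself, which is essentially a one-line calculation once the setup is clear, but rather justifying two technical points that the heuristic derivation glosses over. First, one must verify that $\bar{\beta}_u \in [\beta_u,1)$ defined implicitly by \eqref{eq:cvar_relation} actually exists and is unique; I would argue this from monotonicity and continuity of the map $\beta \mapsto \operatorname{CVaR}^k_\beta(h^\text{D}_{k+1})$ together with $\delta_{k+1} \ge 0$ in the regime where the dynamic zone is activated. Second, the subadditivity step is only a one-sided bound on $\operatorname{CVaR}^k_\beta(X-Y)$, so one must confirm it is being used in the direction that \emph{tightens} rather than loosens the resulting guarantee on $h^\text{D}$, and that treating $\Delta_{k+1}$ consistently as a random variable (induced by the uncertain future obstacle state $\mathbf{x}^o_{k+1}$) causes no inconsistency across the two sides of \eqref{eq:cvarhzhd_betau}.
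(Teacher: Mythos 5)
Your proposal is correct and follows essentially the same route as the paper: the paper's own proof is a terse appeal to the construction of $\bar{\beta}_u$ (``by design,'' the CVaR condition at the upper bound $\bar{\beta}_u$ with $h^\text{Z}$ coincides with that at $\beta_u$ with $h^\text{D}$), and your chain through \eqref{eq:hzhd}, \eqref{eq:cvarhzhd_betau}, and \eqref{eq:cvar_relation} simply makes that same derivation explicit before invoking Theorem~\ref{thm:cvar_safe}. The technical points you flag (existence/uniqueness of $\bar{\beta}_u$ and the direction of the subadditivity bound) are likewise left implicit in the paper, so your argument does not diverge from its approach.
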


\begin{proof}
Under the distance‐based formulation, the adaptive risk level $\beta_k$ is chosen from $(0,\beta_u]$ so that the risk adaptive CVaR-BF guarantees safety (by Theorem~\eqref{thm:adaptive}). In the dynamic zone–based formulation, $\beta_k$ is selected from $(0,\bar{\beta}_u]$. By design, when the adaptive risk level in the dynamic zone approach reaches its upper bound $\bar{\beta}_u$, the resulting CVaR condition is equivalent to that obtained with the upper bound $\beta_u$ in the distance‐based case. Therefore, the safe set $\mathcal{S}$ remains CVaR‐safe with at least the same probabilistic safety guarantee.
\end{proof}


\subsection{Risk Adaptive CVaR-BF Optimization}
Building on the CVaR-BF constraint in \eqref{def:adaptive_cvar_bf} and leveraging the adaptive risk level \(\beta_k\) defined in \eqref{eq:beta_k}, we formulate an optimization problem that ensures probabilistic safety while minimally deviating from a nominal control input. Consider the robot operates in an environment with \(N\) \textit{dynamic, uncertain} obstacles. The objective is to design a controller that drives a mobile robot toward its target while avoiding dynamic obstacles, thereby achieving a ``reach and avoid" task with minimal collision probability. 


\begin{figure*}
    \centering
    \includegraphics[width=0.89\linewidth]{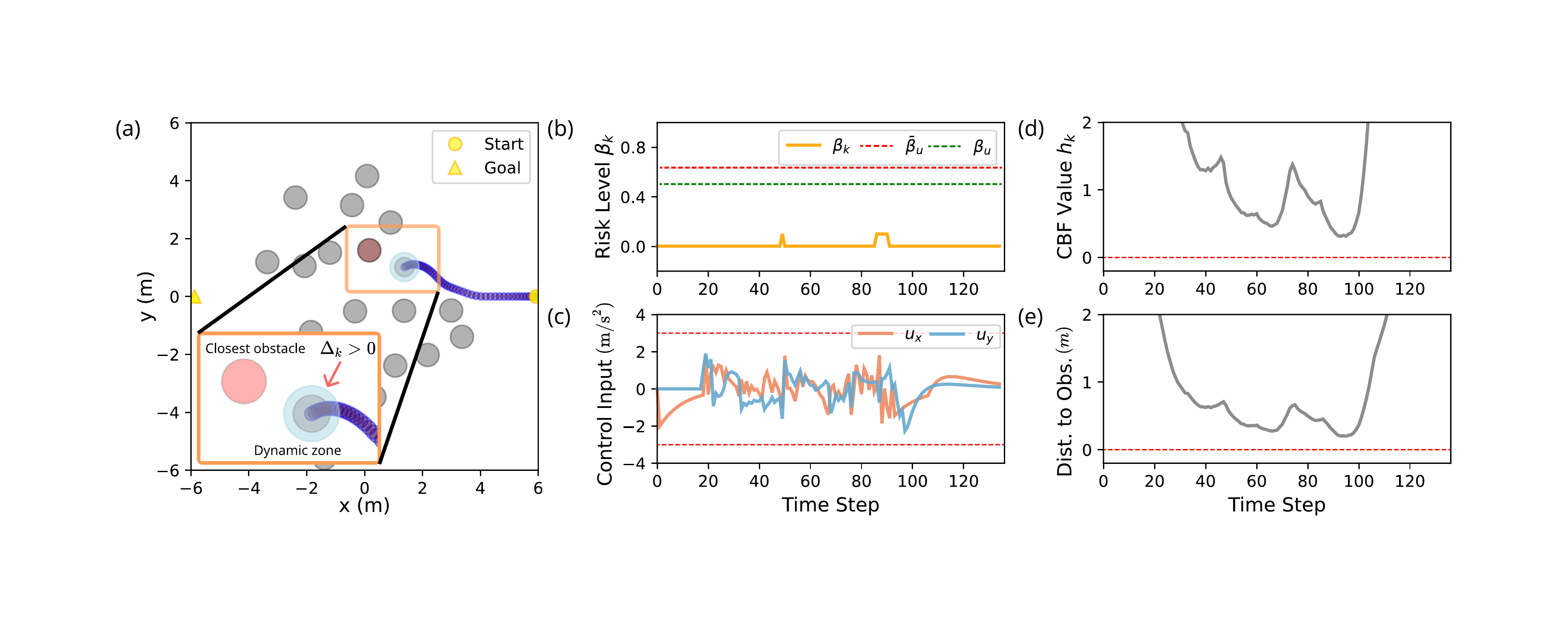}
    \caption{Visualization of robot trajectories and associated metrics for the whole trajectory in an environment with 20 uncooperative obstacles. 
    (a) Snapshot of the trajectory at a critical time step. (b) Risk level $\beta_k$ over time. (c) Control input over time. (d) CBF value over time (e) Distance to closest obstacle over time. 
    In (a), when the robot and an obstacle approach each other with high relative velocity, the dynamic safety margin will expand, i.e., $\Delta_k>0$ making robot proactively avoid obstacles. As shown in (b), $\beta_k$ will also increase where feasible space is limited, but it remains below the upper bound $\bar{\beta}_u$, ensuring safety without being overly conservative. (e) shows robot maintains a safe distance from obstacles due to dynamic zone-based barrier function.}
    \label{fig:sceanrio}
\vspace{-17pt}
\end{figure*}

\xinyi{
To approximate the CVaR term in the safety constraint, for each obstacle, 
we consider a finite set of possible realizations of the next obstacle state, $\mathcal{X}^o_{k+1} = (\mathbf{x}^{o,(1)}_{k+1}, \mathbf{x}^{o,(2)}_{k+1}, \dots, \mathbf{x}^{o,(L)}_{k+1})$, where $L$ denotes the total number of samples. Each realization is associated with a probability $p(\mathbf{x}^{o,(j)}_{k+1})$. 
These samples may be drawn from any predictive distribution or model that captures the uncertainty in the obstacle’s next state. For each realization, we compute the corresponding value of the random variable $h_{k+1}$, which represents the safety measure at time $k+1$. These samples are then used to approximate the expectation in the CVaR formulation in~\eqref{eq:cvar}.}
Using the CVaR definition from \eqref{eq:cvar} and its reformulation in \eqref{eq:cvarqp}, the following optimization problem is solved at each time $k$:
\begin{problem}[Risk Adaptive CVaR-BF Optimization]
\begin{equation}
\begin{aligned}  
\label{eq:opt2}
&\min_{\mathbf{u}_k \in \mathcal{U}, \zeta_i \in \mathbb{R}}
 \;\|\mathbf{u}_k - \bar{\mathbf{u}}_k\|^2 \\
 &\mathrm{s.t.}  -\Bigl( \zeta_i + \frac{1}{\beta_{k}} \sum_{j=1}^{ L} p_j \bigl[-h^\text{Z}_{i,j,{k+1}} - \zeta_i\bigr]_+ \Bigr) \geq (1-\gamma_{i})\, h^\text{Z}_{i,k},\\
 &\quad \quad \quad  \forall\, i \in \{1,\dots,N\}, \forall\, j \in \{1, \dots, L\}. \nonumber
\end{aligned}
\end{equation}
\end{problem}

By introducing auxiliary variables $\eta_j = h^\text{Z}_{i,j,{k+1}}-\zeta_i$, we can equivalently reformulate the max operators \((\cdot)_+\) as linear inequalities \cite{sarykalin2008value}:
\begin{equation}
\begin{aligned}  
\label{eq:opt3}
& \min_{\mathbf{u}_k \in \mathcal{U},  \zeta_i \in \mathbb{R}, \eta_j \in \mathbb{R}}
 \;\|\mathbf{u}_k - \bar{\mathbf{u}}_k\|^2 \\
 &  \text{s.t.}  \quad\quad \eta_j \geq 0,  \quad  -h^\text{Z}_{i,j,{k+1}}  - \zeta_i - \eta_j \leq 0, \\
 & \quad\quad -\Bigl(\zeta_i + \frac{1}{\beta_{k}} \sum_{j=1}^{L } p_j\, \eta_j \Bigr) \geq (1-\gamma_{i})\, h^\text{Z}_{i,{k}},\\
 &\quad\quad \quad  \forall\, i \in \{1,\dots,N\},\; \forall\, j = \{1, \dots, L\}.
\end{aligned}
\end{equation}



\section{Simulations}
\subsection{Experimental Setup}
\subsubsection{Implementation Details}
We set $\beta_u = 0.5$ as desired risk level and $\bar{\beta}_u$ can be approximately estimated by numerically solving \eqref{eq:cvar_relation}. Then, to solve \eqref{eq:beta_k}, we discretize the interval \((0,\bar{\beta}_u]\) into a finite set of candidate risk levels with $B$ elements, $\mathcal{B} = \{ \beta^{(1)}, \beta^{(2)}, \dots, \beta^{(B)} \}$, with 
\(
0 < \beta^{(1)} < \beta^{(2)} < \cdots < \beta^{(B)} \le \bar{\beta}_u.
\)
For each $\beta \in \mathcal B$, we assess the feasibility of the optimization problem in \eqref{eq:opt3} using parallel processing. Specifically, at each time step \(k\), the adaptive risk level \(\beta_k\) is determined as:
\begin{equation}
\label{eq:betak2}
\beta_k = \min\{ \beta \in \mathcal{B} \mid \mathcal{U}^k_{\beta} \neq \emptyset \},
\end{equation}


\subsubsection{Agent Settings}

\xinyi{
While, in general, the next positions of obstacles could be predicted from learned trajectory prediction models (e.g., \cite{huang2022survey}), for simplicity we adopt a constant-velocity model with additive uncertainty. Specifically, for each obstacle, we generate $L$ samples of the next position as $\mathbf{p}^o_{k+1} = \hat{\mathbf{p}}^o_{k} + \hat{\mathbf{v}}^o_{k} \Delta t + \mathbf{w}_{k}$, where each \(\mathbf{w}_{k} \in \mathcal{W}\) is assigned a probability over the position space. We set $L =20$ in our experiments.
Although the uncertainty set $\mathcal{W}$ can be constructed using any quantification technique (e.g., \cite{zhang2024safety}) and is not limited to a particular distribution, in this case study, we simply sample $\mathbf{w}_{k}$ from a zero-mean Gaussian with covariance \(
\Sigma_p = \operatorname{diag}(\sigma^2, \sigma^2) \). The standard deviation is varied as \(\sigma \in \{0.0,\, 0.025,\, 0.05,\, 0.075,\, 0.15\}\), and samples are drawn within \(\pm 3\sigma\) for each axis.}
\xinyi{
We assess our method in a widely used crowd navigation simulator within a 12m $\times$ 12m space \cite{chen2019crowd} (see Fig.~\ref{fig:sceanrio}), where each obstacle follows the Social Force Model (SFM)~\cite{helbing1995social} with \textit{uncooperative behavior}, meaning obstacles avoid collisions only with one another, not with the robot.} Their maximum speeds along each axis are chosen uniformly from $\{0.3, 0.6, 0.9, 1.2\}$m/s, and their radii are selected from $\{0.3, 0.4, 0.5\}$m. The robot is modeled as a double integrator, with dynamics given by
\begin{equation}
\begin{aligned}
\mathbf{p}_{k+1} &= \mathbf{p}_{k} + \Delta t\, \mathbf{v}_k + \tfrac{1}{2}\Delta t^2\, \mathbf{a}_k,  
\\ 
\mathbf{v}_k &= \mathbf{v}_k + \Delta t\, \mathbf{a}_k, 
\end{aligned}
\end{equation}
\xinyi{
The nominal controller $\bar{\mathbf{a}}_k$ is designed as a simple go-to-goal proportional controller.
}
For each axis, the maximum acceleration is restricted to be less than \(3\,\text{m/s}^2\), and the maximum velocity is limited to under \(2\,\text{m/s}\).
The time step is set to $\Delta t = 0.1\,\text{s}$. The sensor range is \(5\,\text{m}\).

\begin{figure*}[t]
\centering
\hspace{-0.2cm}
\subfloat[SR vs. number of obstacles ($\sigma$ = 0.0)]{%
  \includegraphics[width=0.30\linewidth]{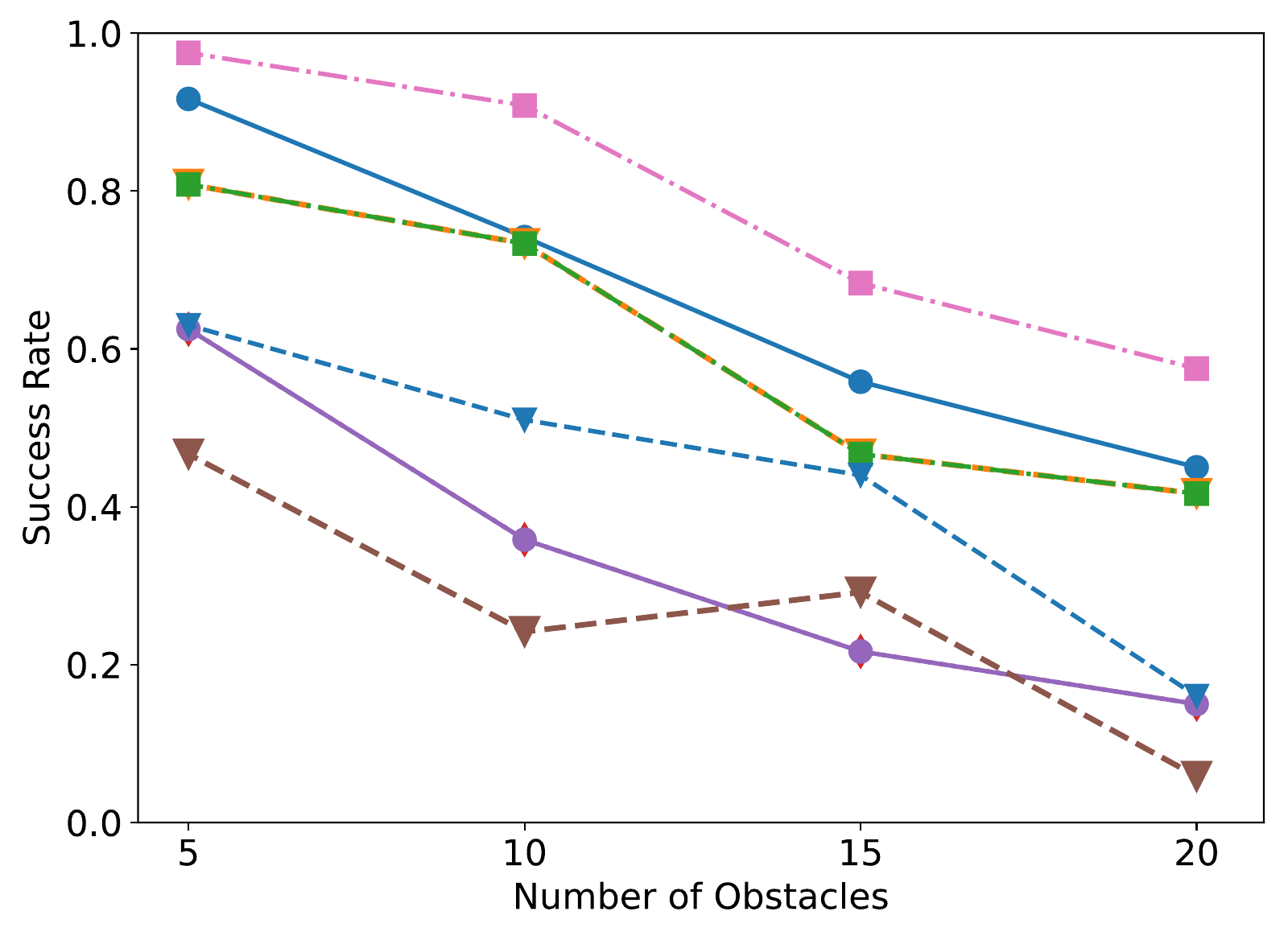}%
  \label{fig:a}
}
\hspace{-0.3cm}
\subfloat[SR vs. number of obstacles ($\sigma$ = 0.05)]{%
  \includegraphics[width=0.30\linewidth]{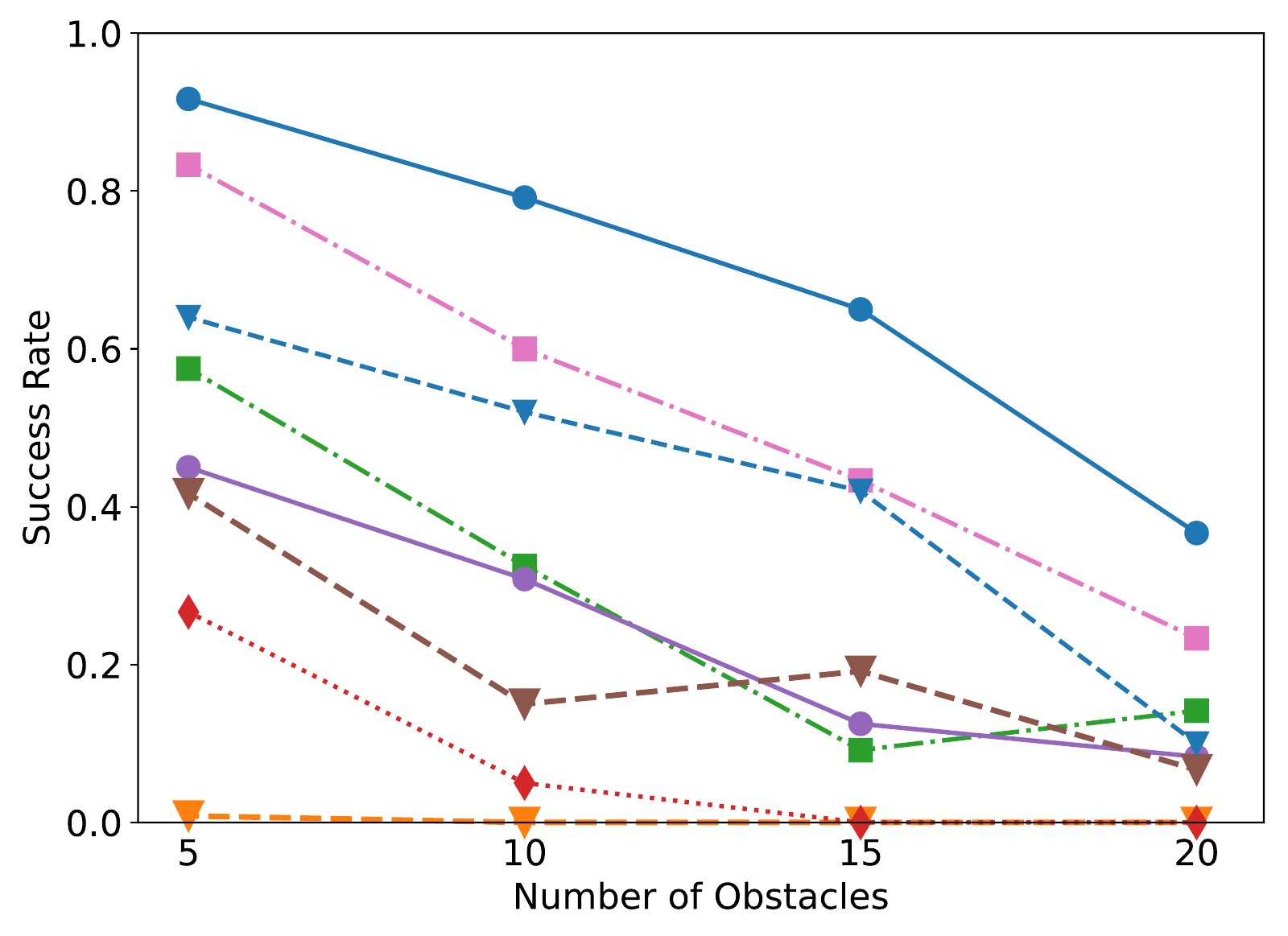}%
  \label{fig:b}%
}
\subfloat[SR vs. noise level (obstacle number = 15)]{%
  \includegraphics[width=0.30\linewidth]{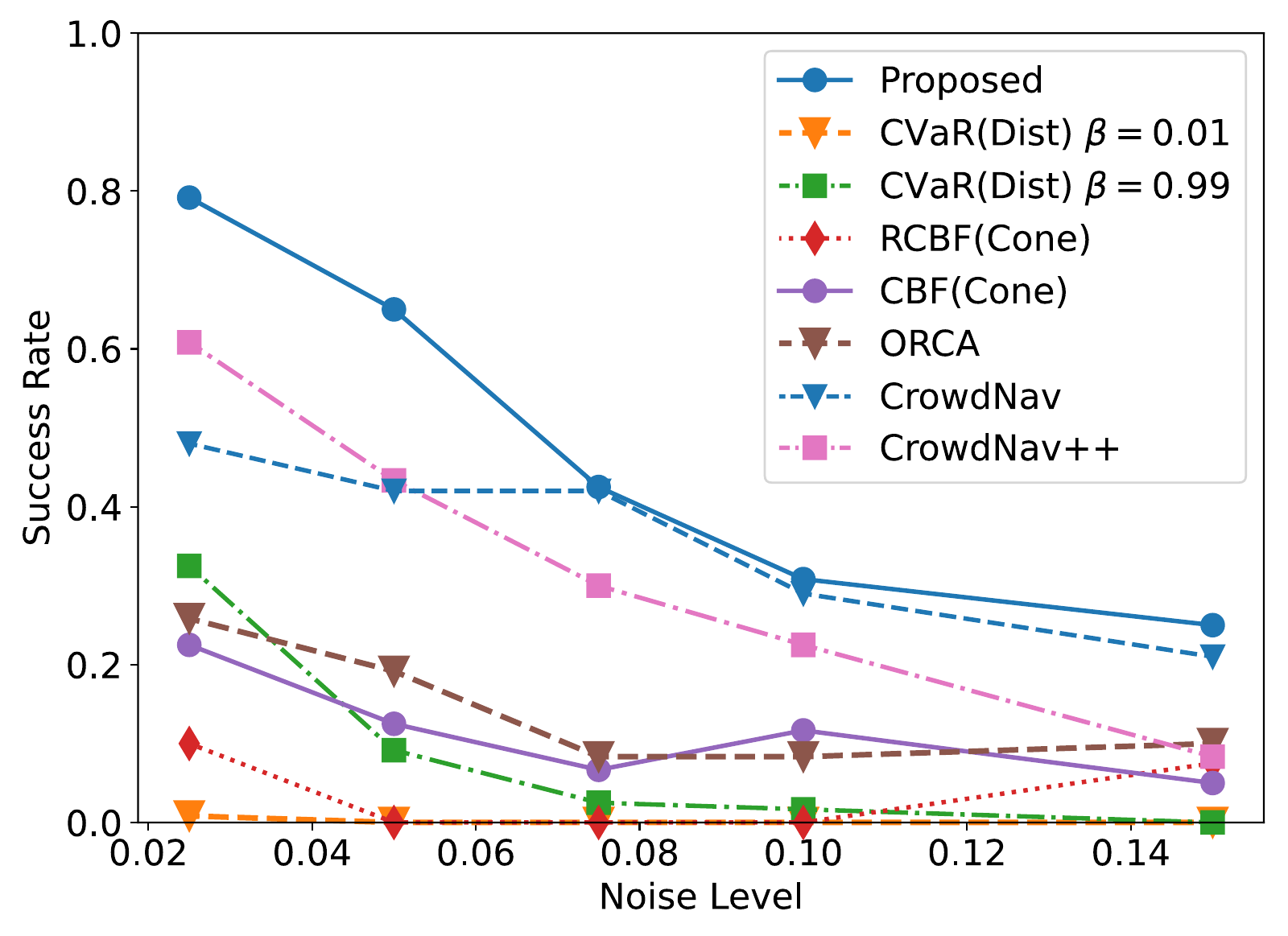}%
  \label{fig:c}%
}
\caption{Success rate comparisons under different noise levels and obstacle numbers.}
\label{fig:benchmark}
\vspace{-15pt}
\end{figure*}

\subsubsection{Performance Metrics}
Let \(m_t, m_s, m_f,\) and \(m_c\) denote the total, successful, and feasible test cases, and the number of collisions, respectively, and let $\mathcal{M}$ be the set of successful cases. 
A test case is considered \emph{feasible} if, for every time step \(k\) along the planned trajectory, the feasible set~$\mathcal{F}_k $~\eqref{eq:feasible_set} of the optimization problem in \eqref{eq:opt3} is nonempty, i.e., \(\forall k,\; \mathcal{F}_k \neq \emptyset\). Furthermore, a test case is deemed \emph{successful} if it is feasible and the robot reaches the goal point, denoted by \(\mathbf{x}_{\text{goal}}\), i.e., $\bigl(\forall k,\quad \mathcal{F}_k \neq \emptyset\bigr)
\, \wedge \,  \bigl(\exists\,K:\,\|\mathbf{x}_k - \mathbf{x}_{\text{goal}}\| < \varepsilon
, \,\forall k \ge K\bigr)$, where \(\varepsilon\) is a small positive threshold.
For each \(i \in \mathcal{M}\), \(L_i\) and \(T_i\) denote the trajectory length and execution time, respectively. We evaluate the following performance metrics: 1) Success Rate (SR): \(\displaystyle \frac{m_s}{m_t}\), 2) Feasibility Rate (FR): \(\displaystyle \frac{m_f}{m_t}\), 3) Collision Rate (CR): \(\displaystyle \frac{m_c}{m_t}\) 4) Average Trajectory Length (ATL): \(\displaystyle \frac{1}{m_s} \sum\nolimits_{i \in \mathcal{M}} L_i\), 5) Average Trajectory Time (ATT): \(\displaystyle \frac{1}{m_s} \sum\nolimits_{i \in \mathcal{M}} T_i\).

\subsection{Benchmark Comparisons}
We compare our method with the following baselines:
\begin{enumerate}
    \item CVaR-BF Methods: distance-based CVaR-BF with fixed risk levels (\(\beta=0.01\) or \(0.99\))~\cite{ahmadi2021risk} (\textit{CVaR(Dist)}).
    \item CBF Methods: collision cone-based CBF, as defined in~\eqref{eq:cone}~\cite{tayal2024control}, including a robust control for worst-case scenarios (\textit{RCBF(Cone)}) and a standard CBF-based control without explicit uncertainty handling (\textit{CBF(Cone)}).
    \item RL Methods: socially attentive reinforcement learning (\textit{CrowdNav})~\cite{chen2019crowd} and its extension incorporating predicted obstacle intentions (\textit{CrowdNav++})~\cite{liu2023intention}.
    
    \item Geometric Methods: reciprocal velocity obstacles for collision-free motion (\textit{ORCA})~\cite{alonso2013optimal}.
\end{enumerate}
Note that both ORCA and the RL methods assume a holonomic model, resulting in a simpler problem setup compared to the other methods. 
For all experiments, we average on 120 different configurations for the obstacles to get the results. 

\begin{table}[ht]
\centering
\caption{
 Comparison results on 5 obstacle scenario with noise $\sigma=0.05$
}
\label{tab:benchmark}
\begin{tabular}{lccccc}
\toprule
\textbf{Method}                & \textbf{SR}   & \textbf{FR} & \textbf{CR}  & \textbf{ATL} & \textbf{ATT} \\ 
\midrule
\rowcolor{gray!20}
\textbf{Proposed}                       & \textbf{0.917}   & \textbf{1.000}  & 0.058   & 13.036       & 13.092     \\
CVaR(Dist) $\beta=0.01$         & 0.008        & 0.008       & \textbf{0.000}   & 11.492       & 9.600      \\
CVaR(Dist) $\beta=0.99$         & 0.575         & 0.683     & 0.108     & 11.571       & 9.238      \\
RCBF(Cone)                     & 0.267         & 0.267      & \textbf{0.000}  & 11.697       & 8.175      \\
CBF(Cone)                      & 0.450         & 0.750       & 0.300       & 11.456       & 8.117      \\
ORCA                           & 0.417        & -          & 0.583      & \textbf{11.424}       & \textbf{6.822}      \\
CrowdNav                     & 0.642       & -          & 0.358     & 13.459       & 8.260     \\
CrowdNav++                     & 0.833        & -          & 0.167     & 14.266       & 8.451      \\
\bottomrule
\end{tabular}
\vspace{-5pt}
\end{table}


Table~\ref{tab:benchmark} presents the results for the scenario with $5$ obstacles and a noise level $\sigma$ = 0.05. 
As we can see, our proposed method achieves the highest success rate by achieving the highest feasibility rate while keeping a low collision rate, demonstrating our adaptive CVaR-BF helps improve optimization feasibility and safe decision.
In contrast, the \textit{CVaR(Dist)} methods show limitations: with \(\beta=0.99\), the absence of a dynamic zone results in a higher collision rate, whereas with \(\beta=0.01\), the lack of adaptive risk adjustment significantly reduces the feasibility rate. 
The cone-based methods, \textit{RCBF(Cone)} and \textit{CBF(Cone)}, while yielding a lower collision rate, are overly conservative and also suffer from poor feasibility. Although \textit{CrowdNav++} achieves the second highest success rate, it lacks a probabilistic safety guarantees, leading to a high collision rate.

We provide a more detailed analysis of the statistical SR in Fig.~\ref{fig:benchmark} with respect to different number of obstacles and noise levels. 
Fig.~\ref{fig:a} shows that in a deterministic environment ($\sigma = 0.0$), \textit{CrowdNav++} achieves the highest success rate since it is trained in the same deterministic environment. \textit{CrowdNav} lacks obstacle trajectory prediction, thus exhibiting a lower success rate.
Note that all CVaR-BF based methods reduce to standard CBF methods when $\sigma = 0.0$, leading to ineffective risk adjustments, therefore do not show superior performance. However, since our approach incorporates a dynamic zone, it still outperforms the cone-based and distance-based barrier functions.
As shown in Fig.~\ref{fig:b}, introducing noise reduces the SR of RL methods regardless of the number of obstacles. Although \textit{CrowdNav++} considers randomized obstacle behaviors (e.g., goal change and size variation), it still experiences a significant performance drop under uncertainty. 
More conservative methods, e.g., \textit{RCBF(Cone)} and \textit{CVaR(Dist)} with \(\beta=0.01\) also suffer significant degradation due to feasibility issues, resulting in the worst performance. 
An interesting finding is that our method performs even better in uncertain environments (Fig.~\ref{fig:b}) than in deterministic ones (Fig.~\ref{fig:a}). 
This is because in uncertain environments, the risk adaptation mechanism becomes effective and provides additional flexibility.
Fig.~\ref{fig:b} and Fig.~\ref{fig:c} further indicate that as noise level increases, the performance of all methods deteriorates. However, our proposed approach consistently achieves the highest success rate, demonstrating its robustness against uncertainty.

\subsection{Ablation Studies}
Table~\ref{tab:ablation} extends the experiments presented in Table~\ref{tab:benchmark} by using the same scenario, providing further insights on the contributions of individual components in our approach.

\subsubsection{Effectiveness Validation of Dynamic Zone CBF} 
We first replace the dynamic zone-based barrier function with other counterparts. The distance-based barrier function achieves a high feasibility rate but at the sacrifice of safety. Conversely, the cone-based barrier function is excessively conservative, leading to poor overall performance. 
These observations underscore the importance of the dynamic zone: 
even with risk adaptation, other CBF methods still struggle in such highly dynamic obstacle environments. 
\subsubsection{Effectiveness Validation of Risk Adaptation}
We further validate the effectiveness of adaptive risk levels by using fixed risk levels. A low $\beta$ value (0.01) yields a very low collision rate, but at the expense of  feasibility rates. Conversely, increasing  $\beta$ to 0.99 enhances the feasibility rate, yet with an increase in collisions. In contrast, our risk adaptation achieves great feasibility at the very low safety cost. 

\begin{table}[t]
\centering
\caption{Ablation study results: performance comparison of the proposed method and its variants using different CBFs and risk-level strategies.}
\label{tab:ablation}
\scalebox{0.88}{
\begin{tabular}{ccccccc}
\toprule
\textbf{Method}  & \textbf{Setting}  & \textbf{SR} & \textbf{CR} & \textbf{FR}  & \textbf{ATL} & \textbf{ATT} \\
\midrule
\rowcolor{gray!20}
\textbf{Proposed}     &  --     & \textbf{0.917}  & 0.058  & \textbf{1.000} & \textbf{13.036}       & 13.092 \\
\midrule
\multirow{2}{*}{\begin{tabular}{@{}c@{}} w/o \\ dynamic zone \end{tabular}}
  & Dist-based         & 0.859  & 0.133  & \textbf{1.000}  & 13.198  & 13.914 \\
  & Cone-based         & 0.548  & 0.067  & 0.615  & 13.529  & \textbf{11.793} \\
\midrule
\multirow{2}{*}{\begin{tabular}{@{}c@{}} w/o \\ risk adaptation \end{tabular}}
  & Fix $\beta$=0.01   & 0.681  & \textbf{0.007}  & 0.689  & 13.395  & 13.000 \\
  & Fix $\beta$=0.99   & 0.741  & 0.052  & 0.793  & 13.253  & 12.566 \\
\bottomrule
\end{tabular}}
\vspace{-15pt}
\end{table}

\section{Conclusions}
In this work, we presented a novel risk-adaptive navigation approach based on CVaR-BF that leverages a dynamic zone-based barrier function with an adjustable risk level. Our method flexibly accommodates uncertainties in obstacle models, avoiding overly conservative behavior while maintaining high feasibility and low collision rates even in crowded, dynamic environments. The proposed approach achieves the highest success rate among all baselines, especially under significant uncertainty. Future work includes quantifying the estimation error of human trajectory prediction models, developing continuous-time formulations, and validating our method in more realistic navigation scenarios.


\bibliographystyle{IEEEtran}
\bibliography{refs}


\end{document}